\begin{document}
%
\title{A Data Analytics Framework for Aggregate Data Analysis}


\author{\IEEEauthorblockN{Sanket Tavarageri~~~~~~~~Nag Mani}
\IEEEauthorblockA{Department of Computer Engineering\\
San Jos\'{e} State University\\
Email: \{sanket.tavarageri, nag.mani\}@sjsu.edu}
\and
\IEEEauthorblockN{Anand Ramasubramanian~~~~~~~~Jaskiran Kalsi}
\IEEEauthorblockA{Department of Biomedical Engineering\\
San Jos\'{e} State University\\
Email: \{anand.ramasubramanian,jaskiran.kalsi\}@sjsu.edu}
}


%


\maketitle
\thispagestyle{empty}
\pagestyle{plain}

\newtheorem{theorem}{Theorem}

\begin{abstract}
In many contexts, we have access to aggregate data, but individual level data
is unavailable.
For example,
medical studies sometimes report only aggregate statistics about disease
prevalence because of privacy concerns.
    Even so, many a time it is desirable, and in fact could be necessary
    to infer individual level characteristics from aggregate data.
    For instance, other researchers who want to perform more detailed
analysis of disease characteristics would require individual level data.
    Similar challenges arise in other fields too including politics,
and marketing.

In this paper, we present an end-to-end pipeline for processing of aggregate
    data to derive individual level statistics, and then using the inferred data
    to train machine learning models to answer questions of interest.
We describe a novel algorithm for reconstructing  fine-grained data
    from summary statistics.
This step will create multiple candidate datasets which will form the input
    to the machine learning models.
    The advantage of the highly parallel architecture we propose is that
    uncertainty in the generated fine-grained data will be compensated by the use
    of multiple candidate fine-grained datasets.
    Consequently, the answers derived from the machine learning models
    will be more valid and usable.
    We validate our approach using data from a challenging medical problem
called Acute
    Traumatic Coagulopathy.

\end{abstract}



%
\IEEEpeerreviewmaketitle

\section{Introduction}
Reconstructing individual behavior from aggregate data is termed \emph{ecological inference}
\cite{king2013solution}.
The necessity for ecological inference occurs because
1) the underlying data that gave rise to the aggregate statistics is unavailable and,
2) the analysis that we intend to carry out requires individual level data.
Examples for the need for this kind of analysis abound in various fields.
Below, we list a few scenarios.

\begin{itemize}
    \item Medical studies sometimes report only aggregate statistics about
    prevalence of a disease because
    of privacy concerns.
    Even if data is de-identified before it is put in the public domain, it is
    susceptible to re-identification attacks \cite{el2011systematic}.
    Therefore, medical professionals often choose to only publish aggregate
    statistics out of abundance of caution.
    Yet other researchers working to understand the disease better may wish to
    regenerate
    the original data
    for detailed analysis.
    E.g., for pattern recognition, and for building machine learning models to
    predict outcomes.
    In this paper, we use aggregate data about a condition called \emph{Acute
    Traumatic Coagulopathy} (ATC) as the use case to demonstrate the algorithms, and
    the system developed in this work.
    Specifically, we use the \emph{Odds Ratios (ORs)} of the known factors causing
    ATC published in a medical journal and reconstruct patient data.
    We use the regenerated patient data to train machine learning models to predict
    mortality.
\item    The voting data of elections is generally available at the precinct
    level.
However as ballots are cast in secret, data on how each individual voted
cannot be known.
Politicians and political scientists are often interested in knowing how
different demographic groups voted.
To come up with a reasonably valid answer to this question, one could
couple the voting data with the Census data concerning the precinct and
reconstruct the individual voting behavior.
Enforcement of certain laws may require having individual level
voting data at hand.
For instance, the Voting Rights Act prohibits voting discrimination based on
race, color, or language.
The plaintiffs challenging any alleged discrimination have to first demonstrate
that minority groups vote differently than majority groups,
which can be done via ecological inference.
\item Sales in a supermarket: data on what product is sold
in what quantities in a supermarket is available, but tracing the sales to
individuals may not always
be possible.
Running effective advertising campaigns for grocery items mandates
that the buying patterns of different segments of the customer base be known
so that those segments can be reached via relevant advertisements.
\end{itemize}


In this paper, we present the design and implementation of a highly scalable
system for analyzing aggregate data.
We develop a novel algorithm to reconstruct individual level attributes
from
summary statistics.
Because this is a probabilistic method, to increase the confidence in the
analysis, the reconstruction algorithm outputs multiple candidate datasets.
Each of the candidate datasets is then used to train a machine learning model
to predict a quantity of interest.
Thus, at the end of the training step, we have an \emph{ensemble} method to
predict the outcome.

We run the outlined pipeline in the context of ATC data.
Additionally, to validate the approach we synthesize various patient datasets
that have a
range of ORs.
The synthetic datasets serve as the ground truth for validation:
we run the entire pipeline -- compute summary statistics, reconstruct several
candidate datasets, train the machine learning models, and obtain prediction
results.
The predicted outcomes are compared with the ground truth.
We show that the error rates are low indicating that the reconstruction
algorithms, and the machine learning models are effective in understanding
the underlying processes.

The contributions of the paper are as follows.
\begin{itemize}
    \item We present a novel data reconstruction algorithm to regenerate
    individual level
    data from aggregate data given odds ratios.
    \item A scalable data pipeline architecture is developed to train machine
     learning models with the multiple reconstructed datasets.
    \item We present the results of an extensive experimental evaluation
    validating the proposed approach.
\end{itemize}

The rest of the paper is organized as follows.
We introduce the Acute Traumatic Coagulopathy condition in Section
\ref{section:act}.
The aggregate data available for ATC will be used as a running example to
describe the algorithms, and the techniques developed in the paper.
Section \ref{section:pipeline} develops the data reconstruction algorithm.
In that section, we delineate the system architecture that uses
the regenerated data to train machine learning (ML) models in parallel.
Application of the developed system in the context of ATC as well as
the experiments performed to assess the efficacy
of the system are described in Section \ref{section:case_study}.
The related work is discussed in Section \ref{section:related}.
Section \ref{section:conclusion} concludes the paper with the key findings of
the work.

\section{Acute Traumatic Coagulopathy}
\label{section:act}

Acute Traumatic Coagulopathy (ATC) \cite{dirkmann2008hypothermia,hess2008coagulopathy} is a
condition characterized by
prolonged and/or excessive bleeding immediately following a traumatic injury.
Despite the many recent advances in trauma care, failure to stop bleeding (hemostasis)
following hemorrhage and shock remains the leading cause of death among
children and adolescents \cite{centers2012vital}.
This condition may present as early as 30 minutes after trauma prior to
intervention,
and this period is particularly critical in determining the mortality rates.
It is associated with higher injury severity, coagulation abnormalities, and increased blood transfusions.
The unfortunate consequences are poor clinical outcomes, and high mortality rates in trauma patients.

The underlying biochemical mechanisms that lead to ATC are not definitively known,
and it results from a wide range of symptoms and phenotypes seen in the
patients \cite{meledeo2017acute}.
A lot of the complexity stems from the fact that there are two distinct
pathways -- intrinsic, and extrinsic -- that cause the blood to clot.
If any of the biochemical reactions in the cascade breaks down or is impaired,
that causes
insufficient coagulation and may lead to ATC.
Since ATC is failure of the coagulation system, the laboratory test for identifying ATC
has historically been prolonged prothrombin time (PT) or hypocoagulable condition.
However, this is known to be not true: there are patients arriving with
shortened PT
following trauma, particularly after burn injury and yet are hypocoagulable.
Newer measures such as injury severity score, partial thromboplastin time (PTT),
degree of fibrinolysis, depletion of coagulation factors and inhibitors,
and general failure of blood system have all been identified as primary
indicators of ATC \cite{macleod2003early}.

However, there are inherent discrepancies in the diagnostic tests due to timely sample collection,
quality and availability of assays, lack of baseline pre-injury measurements, inter-individual variability,
and the multivariate nature of coagulopathy itself.
These issues have made the conventional, reductionist approach to understanding
and treatment of ATC a failure, warranting a holistic approach instead.
The long-term goal of ours is to develop machine learning
approaches in conjunction with clinical assays to understand the physiological
mechanisms of ATC as well as to predict the phenotype and treatment outcomes dependably.
The objective in this paper is to develop computational models that can
classify ATC phenotypes
as a function of various known ATC indicators.
Our central hypothesis is that we can use machine learning technology to model
the complex interplay between various hematological and physiological parameters
and predict the chances of ATC accurately.
The rationale for the current research is that the discerned mathematical
relationship
between the various indicators and clinical outcomes will naturally lead to
hypotheses that can subsequently be tested experimentally .

Figure \ref{figure:ORs} shows the impact of abnormal coagulation
parameters in terms of odds ratios reported in a medical journal.

\begin{figure}[h]
    \centering
    \includegraphics[scale=0.50]{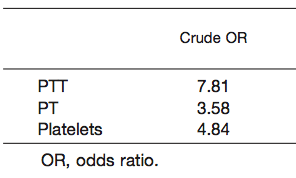}
    \caption{Example of aggregate data on ATC. Source: MacLeod et al.
    \cite{macleod2003early} }
    \label{figure:ORs}
\end{figure}

The figure indicates the odds ratio (OR) of 3.58 for death with an abnormal PT.
Table \ref{table:PT_OR}
enumerates the
number of
patients that had an
abnormal PT value, and the number of people who died.
The \emph{odds} of dying given an abnormal PT value is $\frac{579}{2,415} = 0
.240$,
while the \emph{odds} of dying given a normal PT value is $\frac{489}{7,307} = 0
.067$.
The odds ratio (OR) is the ratio of the two quantities: $\frac{0.240}{0
.067} = 3.58$.
When the OR is greater than 1, having an abnormal PT is considered to be
associated with dying.
Higher the OR, greater is the association between abnormal PT and dying.

\begin{table}[h]
    \caption{PT odds ratio}
    \label{table:PT_OR}
    \begin{center}
        \begin{tabular}{ | l | r | r| }
            \hline
            & Dead & Survived \\ \hline
            Abnormal PT & 579 & 2,415 \\ \hline
            Normal PT & 489 & 7,307 \\
            \hline
        \end{tabular}
    \end{center}
\end{table}

The OR for the other two parameters namely, PTT and Platelets have a similar
meaning.
The \emph{ecological inference} problem is to generate
individual patient level records for $n$ patients such that the ORs would be
as given in Figure \ref{figure:ORs}.
Let us denote a normal PT/PTT/Platelets value with 1, and an abnormal value
with 0.
The ``Dead?'' column will be populated with a 1 if the patient survived, and a
0 if the patient succumbed.
Then, the problem becomes one that of filling Table \ref{table:candidate_dataset}
with 0s, and 1s in such a way that when we compute the ORs on this Table,
they would match with the ones in Figure \ref{figure:ORs}.

\begin{table}[h]
    \caption{Plausible patient data}
    \label{table:candidate_dataset}
    \begin{center}
        \begin{tabular}{ | r | c | c| c | c| }
            \hline
            Id & PTT & PT & Platelet & Dead? \\ \hline \hline
            1 & \large{?} & \large{?} & \large{?} & \large{?} \\
            2 & \large{?} & \large{?} & \large{?} & \large{?} \\
            3 & \large{?} & \large{?} & \large{?} & \large{?} \\
            \vdots & \vdots & \vdots & \vdots & \vdots \\
            $n$ & \large{?} & \large{?} & \large{?} & \large{?} \\
            \hline
        \end{tabular}
    \end{center}
\end{table}

The challenges to address include:
\begin{itemize}
    \item    How to efficiently explore the huge space of possible tables?
    We notice that for the four unknown columns case in Table
    \ref{table:candidate_dataset} where each cell assumes either a 0 or a 1, we
    have a total of $2^{4n}$ unique probable tables.
    In general, when the number of columns is $m$, and each column
    corresponds to a $k$ valued attribute, the number of combinations is
    $k^{mn}$.

    \item How do we exploit the given summary statistics such as odds ratios to
    narrow down the search space, and speed up data reconstruction?

    \item How to quantify the feasibility of the generated table(s)?
    Given the fact that a large number of feasible tables could have given rise
    to a given set of aggregate statistics, is there a way to
    empirically assess the validity of the reconstructed table(s)?
\end{itemize}

We address these challenges next.

\section{Parallel Data Pipeline Architecture}
\label{section:pipeline}

\subsection{Ecological Inference Algorithm}
\label{section:inference}

We illustrate the solution approach on the ATC data first, and then
generalize the solution.
To generate one candidate individual level dataset, we fix the outcome column
-- ``Dead?''
first, and then populate the other feature columns -- PT, PTT, Platelet.
From Table \ref{table:PT_OR} it is seen that the total number of patients in
this study is $579 + 2,415 + 489 + 7,307 = 10,790$.
Of the $10,790$ patients $579 + 489 = 1,068$ died.
Therefore, $1,068$ Ids among $10,790$ Ids are randomly selected, and are
marked as 0 to indicate that patients with those Ids died.
The rest are marked 1.

We subsequently demarcate patients who have abnormal PT values.
A total of $579 + 2,415 = 2,994$ individuals had an abnormal PT.
We also know that $579$ of them died.
Since we already have identified those that have died and in particular $1,
068$ of them, we consider this set and randomly select $579$ Ids from this
set, and assign them an abnormal PT (0).
The remaining Ids in the \emph{dead} set are given a normal PT score (1).
Similarly, we have previously determined who survived.
Among the survived set, randomly selected $2,415$ persons will receive an
abnormal PT, and the rest will get a normal PT value.
We repeat this exercise for the PTT, and Platelet columns.

In this problem, we are attempting to model mortality as a function of PTT,
PT, and Platelet.
Therefore, ``Dead?'' is the outcome variable, and the others are predictor
variables/features.
The key to being able to construct the table efficiently is to first fix the
outcome
variable and then fill in the predictor variables.

\begin{algorithm}
    \KwData{Features: $x_1, x_2, \dots x_n$, Outcome: $y$ \\
    Odds ratios: $o_1, o_2, \dots o_n$, Occurrence ratios: $p_1, p_2, \dots
    p_n$ \\
    Outcome classes: $c_1, \dots c_k$,
    Class ratios: $r_1, \dots r_k$ \\ Individual observations: $N$ \\
    Seed for random number generator: $s$}
    \KwResult{$N$ individual observations $\mathcal{S}$ with $x_1, x_2, \dots
    x_n, y$
    populated}
    Initialize random number generator with seed $s$ \\
    \For{$i\leftarrow 1$ \KwTo $k$}{
    $\mathcal{S}_i \leftarrow$ Randomly select $r_i  N$ observations \\
    $\mathcal{S}_i[y] \leftarrow$ class label $c_i$
    }

    \For{$j\leftarrow 1$ \KwTo $n$}{
    \For{$i\leftarrow 1$ \KwTo $k$}{
    ${l} \leftarrow$ Solve for four unknowns using four equations involving
    $o_j, p_j, r_i, N$ \\
    $\mathcal{S}_{ji} \leftarrow$ Randomly select
    ${l}$ observations from $\mathcal{S}_{y =
    c_i}$ \\
    $\mathcal{S}_{ji}[x_j] \leftarrow c_{x_{ji}}$
    }
    }

    \caption{Individual data reconstruction from aggregate statistics}
    \label{algo:reconstruction}
\end{algorithm}

Algorithm \ref{algo:reconstruction} presents the steps to compute individual
level data using aggregate statistics.
The algorithm takes as input the features, the outcome variable, odds ratios
for
different features, the fraction of observations that carry positive class
within the features, the outcome classes, fraction of positive and negative
class within the outcome variable.
Additionally, the seed to the random number generator is inputted.
By varying the random number generator seed, we will be able to generate
multiple candidate individual datasets.

The algorithm first populates the outcome variable column $y$.
The proportion of observations that have class label $c_i$ is $r_i$.
Therefore, $r_1 N$ observations among $N$ observations are randomly selected
and are assigned label $c_1$.
Next, the remaining $N - r_1 N$ records are given label $c_2$.

\begin{table}[h]
    \caption{Odds ratio for feature $x_j$}
    \label{table:feature_OR}
    \begin{center}
        \begin{tabular}{ | l | r | r| }
            \hline
            & $y = c_1$ & $y = c_2$ \\ \hline
            $x_j = c_{x_{j1}}$ & $l_1 = ?$ & $l_2 = ?$ \\ \hline
            $x_j = c_{x_{j2}}$ & $l_3 = ?$ & $l_4 = ?$ \\
            \hline
        \end{tabular}
    \end{center}
\end{table}

\begin{align}
    \frac{l_1 . l_4}{l_2 . l_3} & = o_j \\
    \label{eqs:eq1}
    l_1 + l_3 &= r_1 N \\
    l_2 + l_4 &= r_2 N \\
    \frac{l_1 + l_2}{l_3 + l_4} &= p_j
    \label{eqs:eq4}
\end{align}

The different feature values for features $x_1, x_2, \dots x_n$ are
subsequently populated.
Table \ref{table:feature_OR} and equations \ref{eqs:eq1}  through \ref{eqs:eq4}
show the relationship between various parameters.
The goal is to 1) find the values of $l_1$, $l_2$, $l_3$, and $l_4$ such that
the various constraints are met 2) select individual records to assign class
labels $c_{x_{j1}}$ or $c_{x_{j2}}$ for each feature $x_j$.

We have four equations \ref{eqs:eq1} $-$ \ref{eqs:eq4} and four unknowns $l_1 -
l_4$.
We solve for $l_i$s.
$l_1$ individual
records are selected that already have $y = c_1$ and label $c_{x_{j1}}$ is
assigned to them.
The remaining $l_3$ records are given label
$c_{x_{j2}}$.
Among the records that have $y = c_2$, we choose $l_2$ records and set
$c_{x_{j1}}$.
Finally, the unassigned $l_4$ observations will receive label $c_{x_{j2}}$.
If certain inputs to Algorithm \ref{algo:reconstruction} are not known in a
given application say a few of occurrence ratios $p_i$s, but only the
numerical ranges the inputs can assume are provided, then the algorithm selects
values within the range for those inputs.

\subsection{Multiple Candidate Datasets}
\label{section:ml}

The original dataset $\mathcal{H}$, which is hidden/unavailable is summarized
using the aggregate statistics.
In \S\ref{section:inference}, we presented an algorithm that generates one
plausible candidate dataset $\mathcal{S}$ whose characteristics in terms of
aggregate statistics are identical to that of the original dataset
$\mathcal{H}$.
In this section, we will define metrics to quantify the similarity between
$\mathcal{H}$ and $\mathcal{S}$.
We will develop a methodology using Algorithm \ref{algo:reconstruction} to
generate
several
plausible candidate datasets -- $\mathcal{S}_1, \mathcal{S}_2, \dots
\mathcal{S}_n$ so that we can provide strong guarantees on the
similarity between $\mathcal{H}$ and $\mathcal{S}$s.

The central idea is that the plausible datasets are generated in such a way
that any two datasets $\mathcal{S}_i$ and $\mathcal{S}_j$
are sufficiently distinct from each other.
Consequently,  as we increase the number of plausible datasets
generated -- $n$, it will be increasingly likely that a row that appeared in the
original dataset $\mathcal{H}$ will appear in at least one of $\mathcal{S}$s.

\subsubsection{Similarity score}
\label{section:similarity_score}
To compute the similarity score between two datasets $\mathcal{S}_i$ and
$\mathcal{S}_j$, we proceed as follows.

\textbf{1) Normalization:} The two datasets are normalized using Min-Max
     scaling.
    The minimum, and maximum values of each attribute are calculated -
     $c_{min}$, and $c_{max}$.
    The attribute value $c$ is then subtracted with $c_{min}$ and divided by
the
    difference between minimum and maximum values.
    $$ \frac{c - c_{min}}{c_{max} - c_{min}} $$
As a result, the attribute values after scaling will be between 0 and 1.
The min-max scaling ensures that when we calculate the distance between two
rows (explained below), all attributes contribute equally.

\textbf{2) Manhattan distance:} The distance between a row of
     $\mathcal{S}_i$ and that of
    $\mathcal{S}_j$ is defined as the Manhattan distance between them scaled by
the
inverse of number of attributes.
If $\mathcal{F}$ is the set of attributes then the distance between two rows
$r_m$, and $r_n$ is defined as follows.
$$ \text{dist($r_m$, $r_n$)} = \sum_{l \in \mathcal{F}} \frac{|r_m[l] - r_n[l]|}{|\mathcal{F}|} $$

\textbf{3) Average distance:} The average distance among pairs
of rows of the two datasets is computed.
The similarity score is one minus the average distance.
The similarity score ranges between 0 and 1 with 1 indicating that the two
datasets exactly match, while 0 denotes that the two datasets are
very dissimilar.

$$ \text{sim}(\mathcal{S}_i, \mathcal{S}_j) = 1 - \text{avg\_dist}(\mathcal{S}_i, \mathcal{S}_j) $$

\textbf{4) Matching in a Bipartite graph:} We derive a mapping between
     rows of $\mathcal{S}_i$
     and
    $\mathcal{S}_j$
    that minimizes the sum of distances among all pairs of rows.
    Minimizing the sum of distances between paired rows also minimizes the
    average distance between the two datasets.
    We show that this problem is equivalent to the matching problem in complete
    bipartite graphs.

While calculating the distance between two datasets - $\mathcal{S}_i$ and
$\mathcal{S}_j$, we pair rows of the two datasets and calculate pairwise row
distances.
It is observed that the exact ordering of the rows does not matter from the
point of view of reconstruction of individual level data from the summary
statistics.
Therefore, we need to find a one-to-one mapping of rows of one dataset into
the other that yields maximum similarity between the two datasets.
We illustrate the scenario with the following example.
Consider the datasets $S_1$ and $S_2$ shown in Table \ref{table:candidate_dataset1}
and Table \ref{table:candidate_dataset2} respectively.

\begin{table}
    \parbox{.45\linewidth}{
    \centering
    \caption{Candidate dataset $\mathcal{S}_1$}
    \label{table:candidate_dataset1}
    \begin{tabular}{ | r | c | c| c | c| }
        \hline
        Id & PTT & PT & Plt. & Dead? \\ \hline \hline
        1 & 1 & 0 & 1 & 1 \\
        2 & 0 & 1 & 0 & 0 \\
        3 & 0 & 0 & 1 & 1 \\
        4 & 0 & 1 & 1 & 1 \\
        \hline
    \end{tabular}
    }
    \hfill
    \parbox{.45\linewidth}{
    \centering
    \caption{Candidate dataset $\mathcal{S}_2$}
    \label{table:candidate_dataset2}
    \begin{tabular}{ | r | c | c| c | c| }
        \hline
        Id & PTT & PT & Plt. & Dead? \\ \hline \hline
        1 & 0 & 0 & 1 & 1 \\
        2 & 0 & 1 & 1 & 1 \\
        3 & 0 & 0 & 1 & 1 \\
        4 & 1 & 1 & 0 & 0 \\
        \hline
    \end{tabular}
    }
\end{table}

\begin{figure}[h]
    \centering
    \includegraphics[scale=0.30]{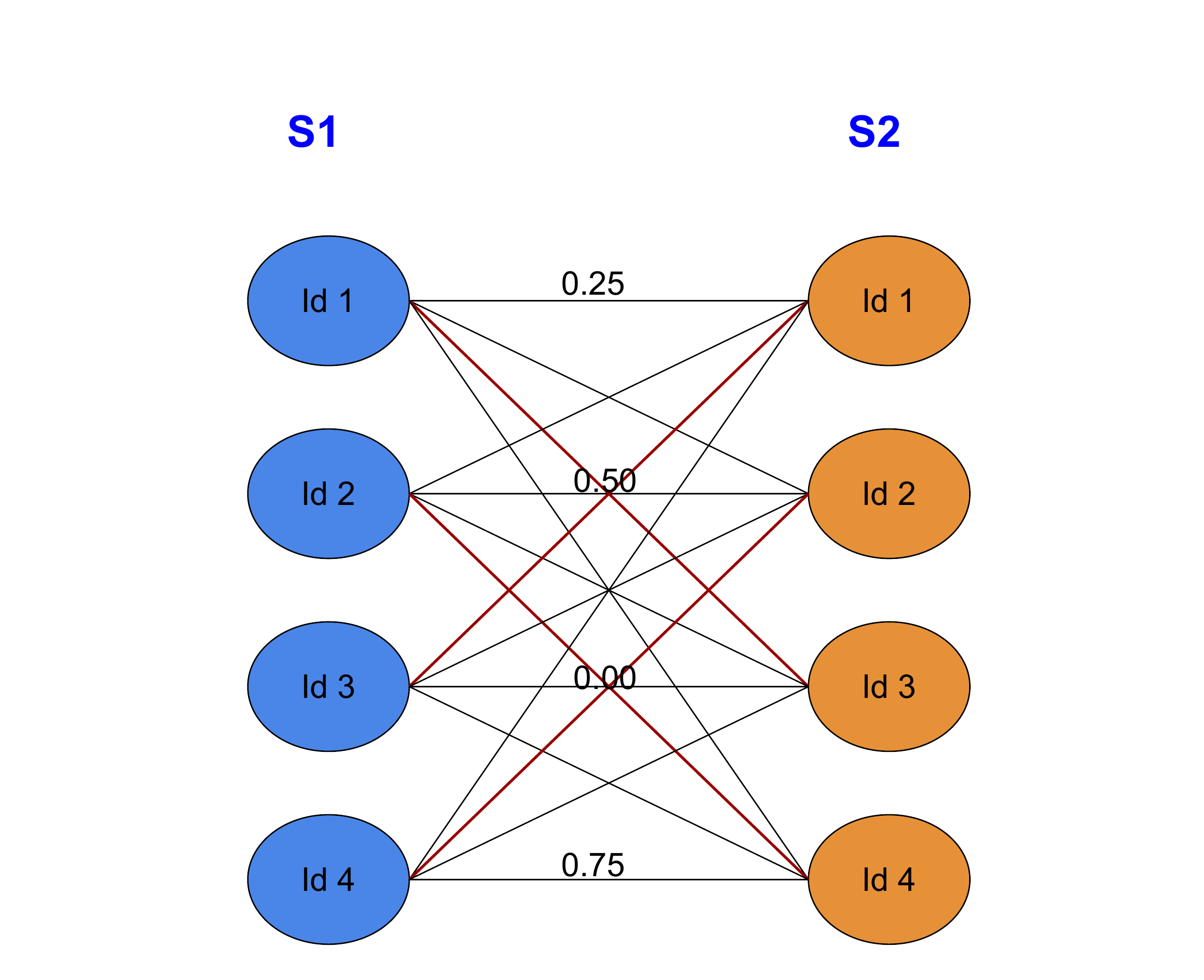}
    \caption{The bipartite graph mapping rows of $\mathcal{S}_1$ to rows of
    $\mathcal{S}_2$}
    \label{figure:bipartite}
\end{figure}

A na\"{\i}ve mapping of rows of $\mathcal{S}_1$ to $\mathcal{S}_2$: row 1 to
row 1, row 2 to row 2 etc will result in the average distance of $0.375$.
However the alternate mapping of row 1 $\rightarrow$ row 3, row 2 $\rightarrow$ row 4,
row 3 $\rightarrow$ row 1, row 4 $\rightarrow$ row 2 will cause the average distance of $0.125$ because row 3 of
$\mathcal{S}_1$ is identical to row 1 of $\mathcal{S}_2$, and row 4 of
$\mathcal{S}_1$ is the same as row 2 of $\mathcal{S}_2$ while the other two
row pairs differ in the value of only one attribute each.

We model this problem of figuring out the optimal mapping from rows of
$\mathcal{S}_i$ to rows of $\mathcal{S}_j$ that gives rise the minimal
average distance as a matching problem in a bipartite graph where one set of
vertices
are the rows of $\mathcal{S}_i$ and the rows of $\mathcal{S}_j$ form the
second set of vertices.
The edges connect rows of $\mathcal{S}_i$ to rows $\mathcal{S}_j$ and
the distance between the two respective rows is the edge weight.
The objective is to find a matching  that minimizes the sum of weights of
edges, and thus minimizes the average distance between the two datasets.
Figure \ref{figure:bipartite} shows the bipartite graph formed for the two
datasets presented in Tables \ref{table:candidate_dataset1}
and \ref{table:candidate_dataset2}.
The optimal matching between vertices is shown in the figure using red lines.

The Hungarian method \cite{kuhn1955hungarian} may be utilized to solve the
matching problem in polynomial time - polynomial in the number of vertices.
The time complexity for the solution is of the order $O(V^3)$ where $V$ is
the vertex set.
In the present work, we implement is a heuristic that runs in linear
time which we find satisfactory and is more efficient:
We rank order rows of $\mathcal{S}_i$, and $\mathcal{S}_j$ based on the sum of
attribute values of respective rows.
The $l^{th}$ ranked row of $\mathcal{S}_i$ is mapped to the $l^{th}$ ranked row
of $\mathcal{S}_j$.
The rationale being that two rows that are similar to each other would have
the sum of their attribute values close to each other, and therefore would be
ranked similarly in the respective datasets $\mathcal{S}_i$, and
$\mathcal{S}_j$.

\subsubsection{Heterogeneous candidate datasets}
While generating $n$ candidate datasets, we ensure that any two candidate
datasets have the average distance between them greater than a threshold
value -- $\delta$ we set.
An important consequence of this stipulation is that we can guarantee that
the rows of the original dataset are reconstructed in the candidate datasets.
Specifically, we provide the following guarantee.

\begin{theorem}
As we increase $n$, the probability $p(n)$ of each row of the original dataset
$\mathcal{H}$ appearing in at least one of the candidate datasets
$\mathcal{S}_1, \mathcal{S}_2, \dots \mathcal{S}_n$ increases. Further,

$$\lim_{n \to \infty} p(n) = 1$$

\end{theorem}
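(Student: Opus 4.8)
The plan is to reduce the statement to a single positive-probability lemma and then amplify it over the $n$ candidate datasets. First observe that, because every attribute takes finitely many values, the set of \emph{distinct} rows occurring in $\mathcal{H}$ is finite; call a generic such row type $t$, specifying a value $a_j$ of each feature $x_j$ together with an outcome class $c$. The key lemma I would establish is: a single run of Algorithm~\ref{algo:reconstruction} produces a candidate dataset containing at least one copy of $t$ with probability $q_t > 0$. The cell counts $l_{j,c}$ reconstructed for each (feature, outcome) pair are pinned down by the two margins, the total, and the odds ratio, hence equal those of $\mathcal{H}$. Within the class-$c$ block of size $M$, the algorithm assigns each feature value by drawing a uniformly random subset of the appropriate size, independently across features. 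So for a fixed record in that block the probability of receiving value $a_j$ equals $\pi_j = l_{j,c}/M$ or $1 - l_{j,c}/M$, which is strictly positive precisely because $\mathcal{H}$ itself witnesses value $a_j$ for feature $x_j$ in class $c$. By independence across features the record receives the whole combination with probability $\prod_j \pi_j > 0$, whence $q_t \ge \prod_j \pi_j > 0$.

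Next I would amplify. In the idealized setting of independent seeds, the events ``$t \notin \mathcal{S}_k$'' are independent with probability $1 - q_t$, so $P(t \text{ absent from all } n) = (1-q_t)^n$; a union bound over the finitely many row types of $\mathcal{H}$ gives $p(n) \ge 1 - \sum_t (1-q_t)^n$, and since each summand decays geometrically, $p(n) \to 1$. Monotonicity — the ``increases'' clause — is immediate and does not even need this bound: the first $n$ candidates form a prefix of the first $n+1$, so the event ``every row of $\mathcal{H}$ is covered by $\mathcal{S}_1,\dots,\mathcal{S}_n$'' is contained in the corresponding event for $n+1$, whence $p(n) \le p(n+1)$.

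The hard part is that the heterogeneity requirement makes the candidates \emph{dependent}: each $\mathcal{S}_{k+1}$ is rejection-sampled so that its average distance to every predecessor exceeds $\delta$. To recover the geometric bound I would replace independence by a uniform lower bound on the conditional hitting probability. Using the standard rejection-sampling identity, an accepted candidate is distributed as a single Algorithm~\ref{algo:reconstruction} draw $X$ conditioned on clearing the filter, so, writing $D_1,\dots,D_k$ for the realized predecessors (none containing $t$),
$$P(t \in \mathcal{S}_{k+1} \mid D_1,\dots,D_k) = \frac{P(t \in X,\ X\ \delta\text{-far from all } D_i)}{P(X\ \delta\text{-far from all } D_i)} \ge P(t \in X,\ X\ \delta\text{-far from all } D_i).$$
The intuition that this quantity stays bounded below is that the filter depends only on \emph{average} row distance, which a single designated row $t$ can move by at most $O(1/N)$; presence of $t$ is therefore nearly independent of passing the filter, so the joint probability is close to $q_t$ times the acceptance probability. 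If I can show both factors are bounded away from $0$ uniformly in $k$ and in the admissible predecessors, then $P(t \text{ absent from all } n) \le (1-q')^n$ for some $q' > 0$ and the union-bound conclusion survives unchanged.

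I expect the uniform lower bound on the acceptance probability to be the real obstacle, and it is tied to a finiteness caveat: for a fixed $\delta > 0$ the number of pairwise-$\delta$-separated datasets consistent with the aggregate statistics is a finite packing number, so strictly the attainable $n$ is bounded and ``$\lim_{n\to\infty}$'' must be read either with $\delta$ taken small relative to the diversity of the consistent space, or as the limit over the attainable range of $n$. Under that reading — equivalently, assuming the consistent, $\delta$-separated feasible space stays rich enough that the acceptance probability does not collapse — the bounds above deliver $\lim_{n\to\infty} p(n) = 1$.
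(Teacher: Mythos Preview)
Your argument is correct and considerably more careful than the paper's, but it proceeds by a genuinely different mechanism. The paper does not amplify a single-run hitting probability at all. Instead it argues by \emph{exhaustion}: the set of datasets consistent with the given aggregates is finite (up to row permutation, which the bipartite-matching distance quotients out), and any two candidates separated by $\delta>0$ are distinct elements of that finite set; hence as $n$ grows the collection $\{\mathcal{S}_1,\dots,\mathcal{S}_n\}$ must eventually contain $\mathcal{H}$ itself, at which point every row of $\mathcal{H}$ is trivially covered. No probability estimate like your $q_t$ ever appears; the $\delta$-separation is used not as a nuisance creating dependence (as you treat it) but as the engine that forces the candidates to sweep the finite space.

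What each approach buys: the paper's exhaustion argument is shorter and sidesteps your conditional-probability analysis of rejection sampling entirely, but it is essentially deterministic and gives no rate. Your route yields an explicit geometric bound $p(n)\ge 1-\sum_t (1-q_t)^n$ in the independent-seed regime and, with the uniform conditional lower bound, a comparable rate under heterogeneity; it also isolates the row-level statement rather than the much stronger event $\mathcal{H}=\mathcal{S}_j$. You also flag the finiteness caveat---that for fixed $\delta$ the packing number bounds the attainable $n$, so ``$n\to\infty$'' needs interpretation---which the paper's own logic in fact relies on but does not acknowledge.
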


\begin{proof}
    The above assertion follows from two observations: 1) The number of
    ways of reconstructing the original dataset $\mathcal{H}$ is finite.
    2) Any two candidate datasets we generate are separated by $\delta$.
    Therefore, as we generate more and more candidate datasets, the
    likelihood of a row in $\mathcal{H}$ appearing in a candidate dataset
    increases.

    If $\mathcal{H}$ has $m$ rows, and $\mathcal{F}$ is the attribute set,
    then we have a total of $m \times \mathcal{F}$ cells to populate.
    If an attribute $l$ can assume $k_l$ different possible values, then the
    total number of combinations we can create is bounded by:

    $$ m \times \prod_{l \in \mathcal{F}} k_l $$

    The number of combinations in practice will be much smaller because of
    constraints derived from odds ratios, fraction ratios in Algorithm
    \ref{algo:reconstruction}.
    We note that the reconstructed dataset $\mathcal{S}$ is invariant with
    respect to row permutations as we compute the distance between two
    datasets using the bipartite matching formulation.
    Hence, in the above upper bound, we have not considered the combinations
    that can result from permutations of rows.

    Any two datasets that are separated by even the smallest of $\delta$
    should be giving rise to one of the distinct plausible combinations.
    Since the number of combinations is finite and upper bounded by the above
    expression, it must be the case that one of the$\mathcal{S}$s must
    be identical to $\mathcal{H}$.
    As a consequence, the theorem statement that each row of $\mathcal{H}$ will
    be present in one of $\mathcal{S}$s is trivially true.
    Additionally, for this weaker requirement (compared to $\mathcal{H} =
S_j$ for some $j$) to hold, we would need a much smaller number of candidate
    datasets.
\end{proof}

\begin{figure}[ht]
    \centering
    \includegraphics[scale=0.38]{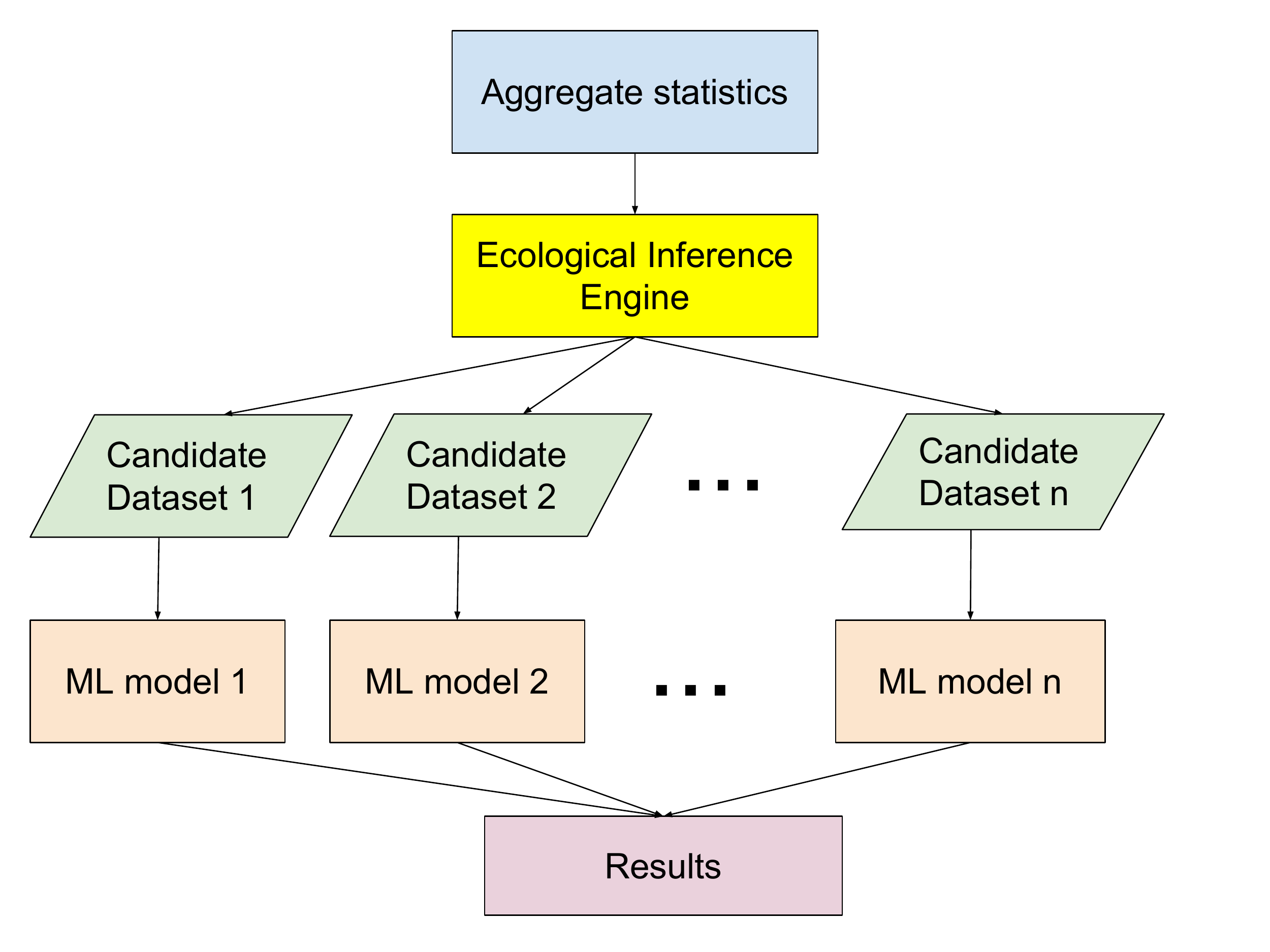}
    \caption{Architecture of the data processing pipeline}
    \label{figure:architecture}
\end{figure}

\subsection{Machine Learning}
\label{section:machineLearning}

Often ecological inference is carried out so that the individual level data
generated may be used for further analysis.
In other words, the end-goal many a time is to perform data analytics on the
ecologically
inferred data rather than use the inferred data directly.
To address this need, we develop a parallel data pipeline architecture to
process aggregate data, perform ecological inference, and use the generated
data to train machine learning (ML) models.

Figure \ref{figure:architecture} presents the architecture of the analytics
system.
The aggregate statistics are inputted into the ecological inference engine
which computes plausible candidate datasets in parallel.
The candidate datasets are then used to train the machine learning models
simultaneously to answer questions of interest.
The outputs of multiple machine learning (ML) models are combined to produce one
aggregate output.
If we are developing the system for a classification problem, then the
majority vote ($mode$) of the $n$ ML models will be the combined result of
the system.
If it is a regression problem that the system is
addressing then the $arithmetic ~mean$ of the outputs of the different models
will be
the
aggregate outcome of the models.

\begin{table*}[h]
    \centering
    \caption{Parameters for ground truth generation.
    Legend: Plate - Platelet, DOA ratio - Dead or Alive ratio}
    \label{table:feature_parameters}
    \begin{center}
        \begin{tabular}{ | l | l | l| l | l | l | l | l | l | l | }
            \hline
            Config. & Gender OR & Gender ratio & PT
            OR & PT ratio &
            PTT OR & PTT ratio &  Plate OR & Plate ratio & DOA ratio\\ \hline
            \hline
            1 & 2 & 0.6 & 4 & 0.3 & 6 & 0.2 & 8 & 0.1 & 0.1 \\ \hline
            2 & 6 & 0.7 & 2 & 0.2 & 4 & 0.1 & 6 & 0.3 & 0.1 \\ \hline
            3 & 8 & 0.8 & 8 & 0.1 & 2 & 0.3 & 4 & 0.4 & 0.2 \\ \hline
            4 & 10 & 0.9 & 8 & 0.2 & 4 & 0.4 & 2 & 0.5 & 0.2 \\ \hline
            5 & 10 & 0.6 & 4 & 0.1 & 2 & 0.2 & 6 & 0.5 & 0.3 \\ \hline
            6 & 4 & 0.5 & 2 & 0.3 & 6 & 0.1 & 8 & 0.2 & 0.3 \\ \hline
            7 & 2 & 0.5 & 6 & 0.3 & 8 & 0.2 & 10 & 0.1 & 0.4 \\ \hline
            8 & 6 & 0.5 & 4 & 0.2 & 8 & 0.1 & 10 & 0.5 & 0.4 \\ \hline
            9 & 8 & 0.5 & 2 & 0.4 & 10 & 0.3 & 10 & 0.4 & 0.45 \\ \hline
            10 & 10 & 0.5 & 6 & 0.4 & 10 & 0.3 & 10 & 0.3 & 0.49 \\ \hline
        \end{tabular}
    \end{center}
\end{table*}


\section{Case Study: Predicting Acute Traumatic Coagulopathy Outcomes}
\label{section:case_study}
We use the developed parallel data pipeline on the Acute Traumatic
Coagulopathy (ATC) data to predict mortality.
The system is implemented in the \emph{Apache Spark} \cite{apache_spark,Zaharia:2016:ASU:3013530.2934664}
version 2.2.0  cluster
computing framework.
Spark is an in-memory Big data computing framework which helps us scale our
implementation to large volumes of data seamlessly.
The predictor variables and the possible values they can assume are the
following:
\begin{itemize}
    \item Gender: Male or Female
    \item Platelet count: Abnormal, or Normal
    \item Prothrombin Time (PT): Abnormal or Normal
    \item Partial Thromboplastin Time (PTT): Abnormal or Normal
    \item Age: age of the patient
\end{itemize}

\subsection{Experimental Set Up}
We perform two sets of experiments:

1) We first assess the efficacy of ecological
inference algorithm in reconstructing the datasets that are similar to the
ground truth data.
We feed the aggregate data to the ecological inference algorithm, and
generate candidate datasets.
Then we compute the similarity scores between the ground truth data and
candidate datasets.

2) We run the entire the data analytics pipeline to understand its effectiveness.
For this, we use the generated data to train the Random Forest \cite{liaw2002classification}
machine learning models.
The performance of the Random Forests are assessed against the ground truth.

\paragraph{\textbf{Ground truth data for evaluation}}
The ground truth data are
synthetically created.
We generate a number of datasets that differ in their characteristics.
For each predictor, we vary odds ratios from 2 to 10, and occurrence ratios
from 0.1 to 0.5.
For the outcome variable -- mortality, the class ratios range from 0.1 to
0.5 as well.

We note that in the ATC context, a predictor variable, for example, either
can have a normal value or an abnormal value.
Similarly, the outcome variable -- mortality can set to ``dead'' or ``alive''.
Thus, in binary valued variables scenario such as this, the experimental
results when we set occurrence ratio of a variable to say, 0.6 will be
identical to the case when the occurrence ratio is 0.4 ($1 - 0.6 = 0.4$).
The results when occurrence ratio is 0.7 will be indistinguishable to the
case when the ratio is 0.3 ($1 - 0.7 = 0.3$).
Therefore, we vary the ratios only between 0.1 to 0.5.

A total of 10 datasets are synthesized in this manner using 10 different
sets of parameters.
Table \ref{table:feature_parameters} enumerates the 10 sets of
parameters used.
Each set of parameters consists of a unique combination of odds ratios and
occurrence ratios for predictors, and outcome class ratios.
For example, the first configuration in the Table has Gender ratio of 0.6
indicating that the male fraction of the population is 0.6 while the OR of
 dying is 2 if one is a male patient.
The ORs of dying because of abnormal PT, PTT, and Platelet values are 4, 6,
and 8 respectively.
The fractions of patients who have abnormal PT, PTT, and Platelet counts are 0.3,
0.2, and 0.1 in that order.
The DOA ratio is 0.1 signifying that 0.1 fraction of the patients died.
Similar interpretations are ascribed to the other $9$ parameter configurations.
For all configurations, the age values are generated such that the mean age
of patients is 36, and standard deviation is 19.
Each ground truth dataset consists of $10,000$ patient records.

\paragraph{\textbf{Performance of aggregate data analytics pipeline}}
Once we have the ground truth data created in this manner, for each parameter
configuration shown in Table \ref{table:feature_parameters}, we run the
pipeline shown in Figure \ref{figure:architecture}.
For each ground truth dataset, we output $9$ candidate datasets and compute
the similarity scores between the ground truth data and candidate datasets.
The Random Forest machine learning models are trained using the $9$ candidate
datasets.
A Random Forest comprises of 50 decision trees, each of depth 8.

We record the performance
of Random Forests in terms of \emph{accuracy}, \emph{precision}, and
\emph{recall}.
The ground truth data sans the label is inputted to the Random Forest
model to predict mortality.
Each patient can have the label as either \emph{dead}, or \emph{alive} and
\emph{dead} is the positive class for our evaluation.
Consequently, \emph{accuracy}, \emph{precision}, and \emph{recall} are
defined as follows.

\begin{align*}
    \text{accuracy} &= \frac{\text{\#patients correctly classified
    as either dead or alive}}{\text{total number of patients}} \\
    \text{precision} &= \frac{\text{\#patients correctly classified
    dead}}{\text{\#patients classified dead}} \\
    \text{recall} &= \frac{\text{\#patients correctly classified
    dead}}{\text{\#actual patients dead}}
\end{align*}


\subsection{Experimental Results}
\label{section:experiments}

\paragraph{\textbf{Similarity between the ground truth and reconstructed
datasets}}

Figure \ref{figure:similarity_score} shows the similarity score between the
ground truth dataset and the reconstructed datasets for various parameter
configurations shown in Table \ref{table:feature_parameters}.
For each configuration we generate a ground truth dataset meeting the
parameter specifications.
We calculate aggregate statistics on the ground truth dataset and use that as
the input to the ecological inference algorithm.
The algorithm outputs 9 different candidate datasets.
The threshold distance between any pair of candidate datasets is set to $0.15$.
The figure shows the average similarity score between the ground truth and
all the
candidate datasets (the highest possible score being 1).
We observe that the similarity score is consistently high and hovers around
$0.80$ for all configurations.
The variance in similarity scores between a ground truth data and the
candidate datasets generated for it is low.
For Config.1, for example, the minimum similarity score is $0.8192$, while
the maximum is $0.8230$.
The average score is $0.8202$.

\begin{figure}[h]
    \centering
    \includegraphics[scale=0.42]{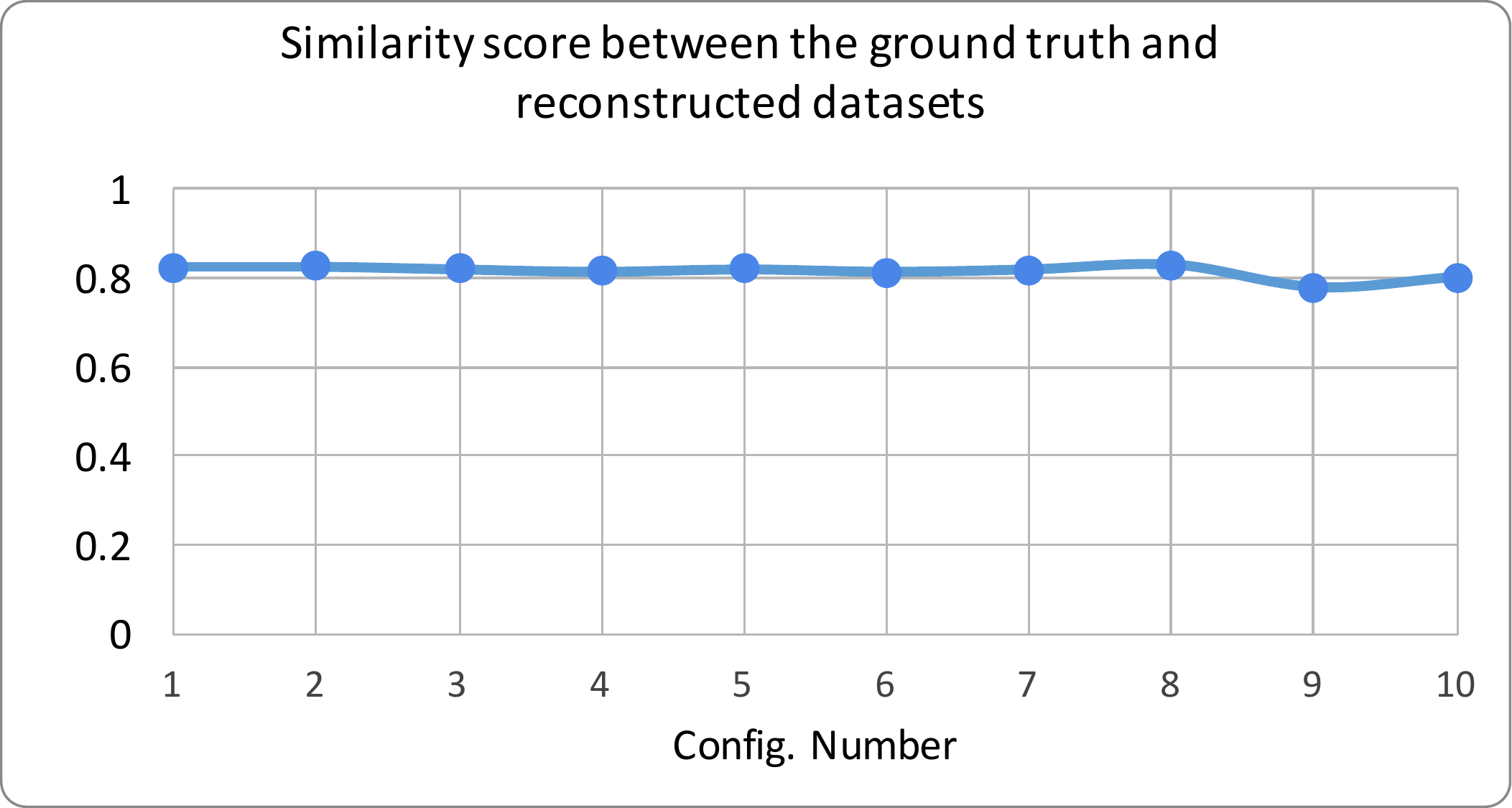}
    \caption{The average similarity score between the ground truth and
    reconstructed datasets}
    \label{figure:similarity_score}
\end{figure}

\begin{figure}[h]
    \centering
    \includegraphics[scale=0.42]{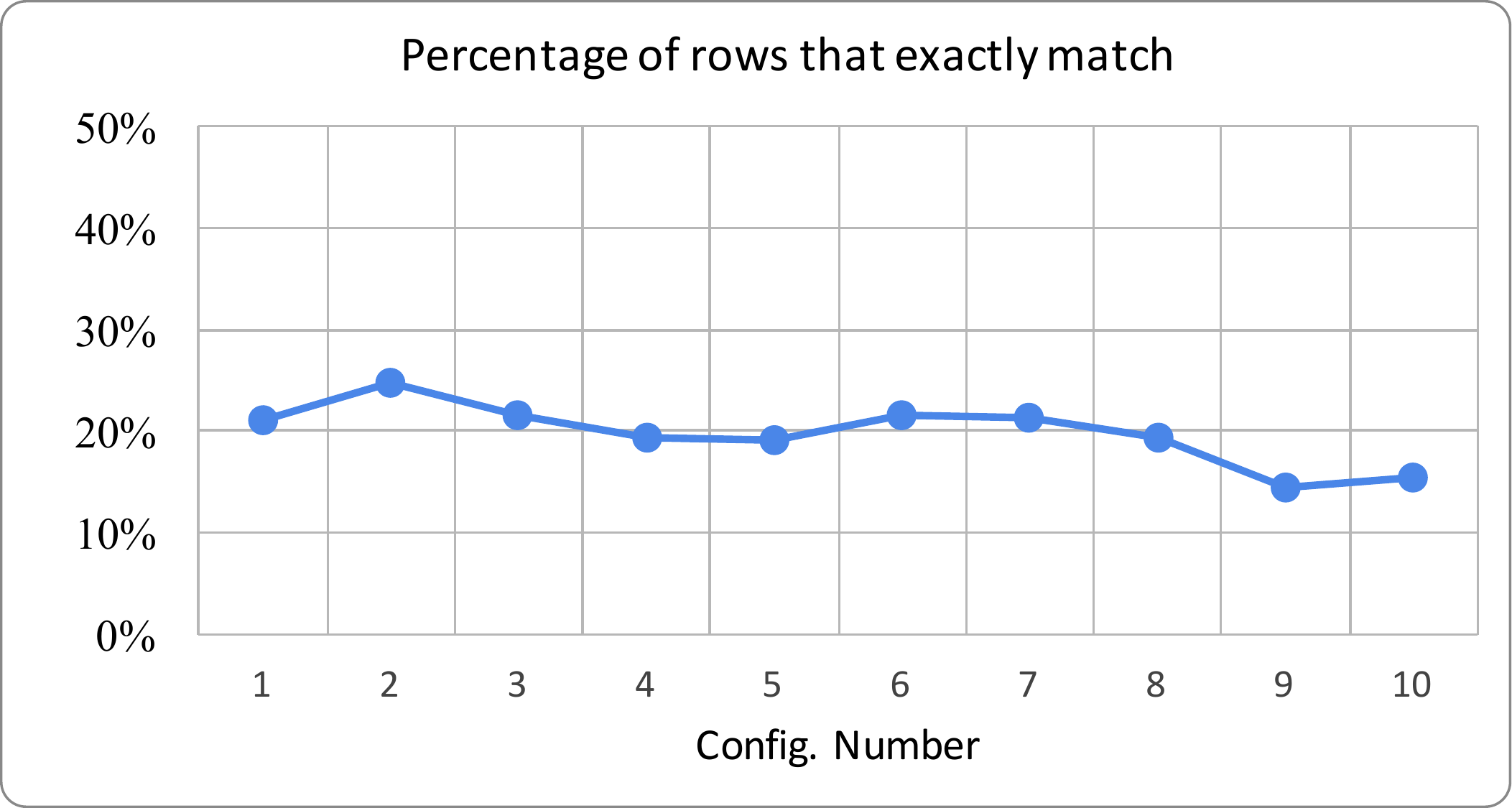}
    \caption{The average percentage of rows that match exactly between the
    ground truth and a reconstructed dataset}
    \label{figure:zero_dist_percent}
\end{figure}

We additionally record the percentage of rows of the ground truth dataset and
candidate datasets that match exactly.
Figure \ref{figure:zero_dist_percent} plots the average percentage over
different candidate datasets.
Between 10\% and 30\% of the rows match exactly.

We note that the similarity scores and exact match rows are computed using the
optimized greedy
algorithm (\S \ref{section:similarity_score}), and therefore the
computed scores and row counts represent lower bound values.
The actual similarity scores and row counts could be higher if  the
Hungarian
method to
derive matching in the bipartite graph is used.
It is  expensive computationally, however.

\paragraph{\textbf{Overall prediction performance}}
Figure \ref{figure:config_performance} shows the performance of the parallel
aggregate data analytics pipeline in terms of accuracy, precision, recall
achieved for the different parameter configurations listed in Table
\ref{table:feature_parameters}.

\begin{figure}[h]
    \centering
    \includegraphics[scale=0.42]{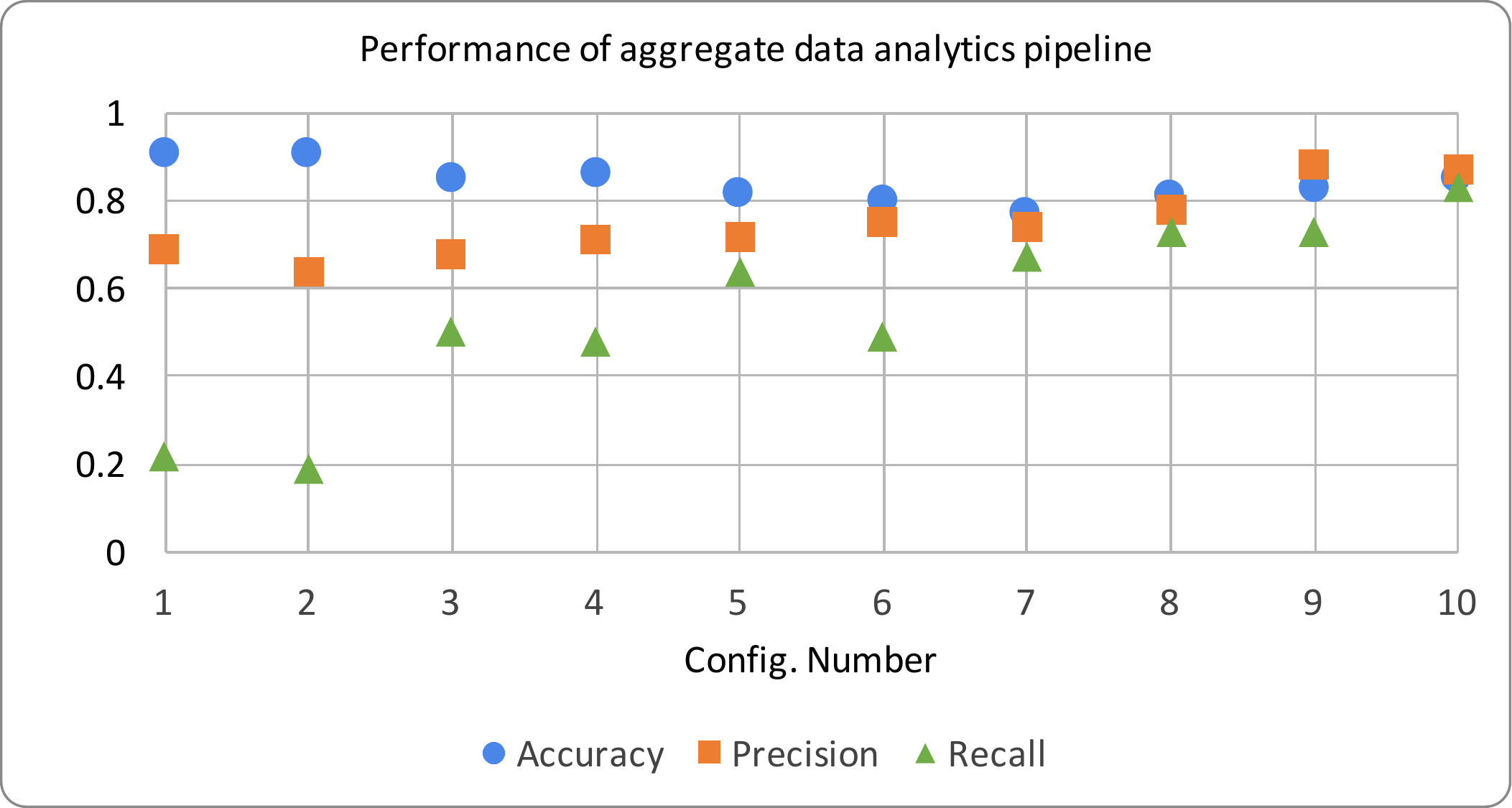}
    \caption{Performance of the pipeline for various configuration parameters}
    \label{figure:config_performance}
\end{figure}

We observe that high \emph{accuracy} is obtained for all configurations.
It fluctuates between 0.77 and 0.91.
Config. 7 has the lowest accuracy -- 0.77 while Config. 1 and 2 both attain
accuracy of 0.91.
The \emph{precision} metric which is the ratio of true positives (the number of
patients that the model correctly classified as dead) to the sum of true
positives and false positives (the
number of patients the model correctly or incorrectly classified as dead),
improves as we move from Config. 1 to 10.
The reason being that the DOA ratio (the fraction of patients who died)
increases with the configuration number.
The result is that the model has more positive (dead patients) samples to
learn from, and its precision increases.
A similar influence is at play with respect to the \emph{recall} metric as
well -- as the DOA ratio increases so does the recall.
As the fraction of dead patients rises, the model is able to correctly
identify a larger fraction of dead patients.
The recall rate is 0.22 when only 10\% of patients are dead (Config. 1), and
it climbs to 0.50 when the percentage of dead patients rises moderately to
20\% (Config. 3).
The recall is highest at 0.83 when 49\% of patients have died (Config. 10).

In sum, we note consistently high accuracy and precision measures while
recall is high when the DOA ratio is moderate to high, but is on the lower
side when the DOA ratio is small.

\begin{figure}[h]
    \centering
    \includegraphics[scale=0.42]{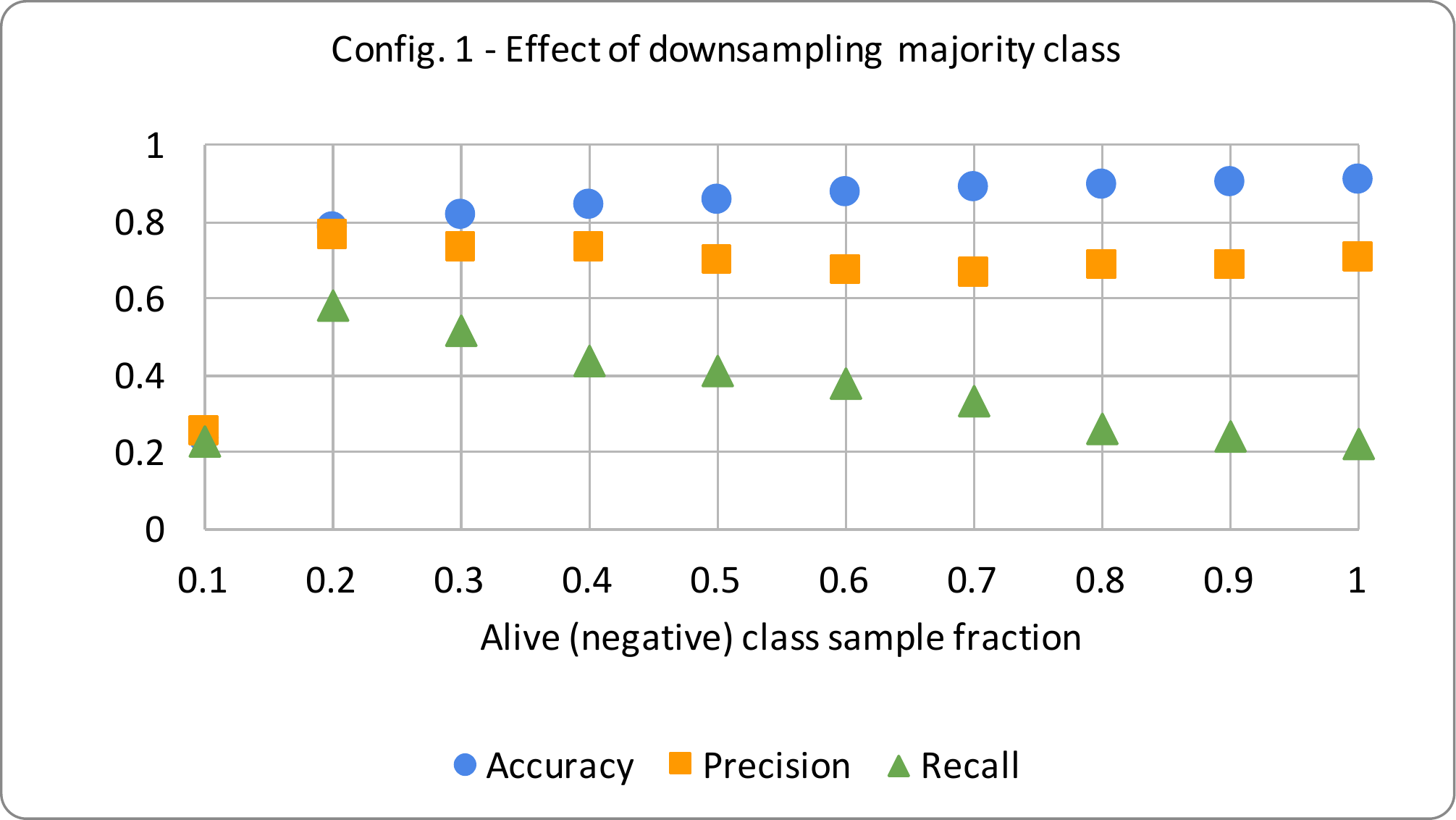}
    \caption{Config. 1: Effect of downsampling majority class in improving
    recall}
    \label{figure:config1_downsampling}
\end{figure}

\paragraph{\textbf{Boosting recall via undersampling of majority class}}
For a context like Acute Traumatic Coagulopathy which is the subject of case
study in this work, having a high \emph{recall} rate at all mortality levels
is crucial: a false negative may have a more adverse impact than a false
positive.
Here, the main reason why the recall rate is low for Config. 1, and 2 is that
the DOA ratio is small, namely 0.1.
As a result, the Random Forest model is being trained with
severely imbalanced candidate datasets -- 90\% of the patients have class label
`Alive', and a small percentage of patients -- 10\% have class label `Dead'.
To mitigate the imbalanced nature of the dataset, we undersample the `Alive'
patients and investigate if that improves the recall rate.
Indeed, as the dataset becomes more balanced, the recall value rises.
Simultaneously, the accuracy very slightly degrades, and there is no
discernible impact on precision.

\begin{figure}[h]
    \centering
    \includegraphics[scale=0.42]{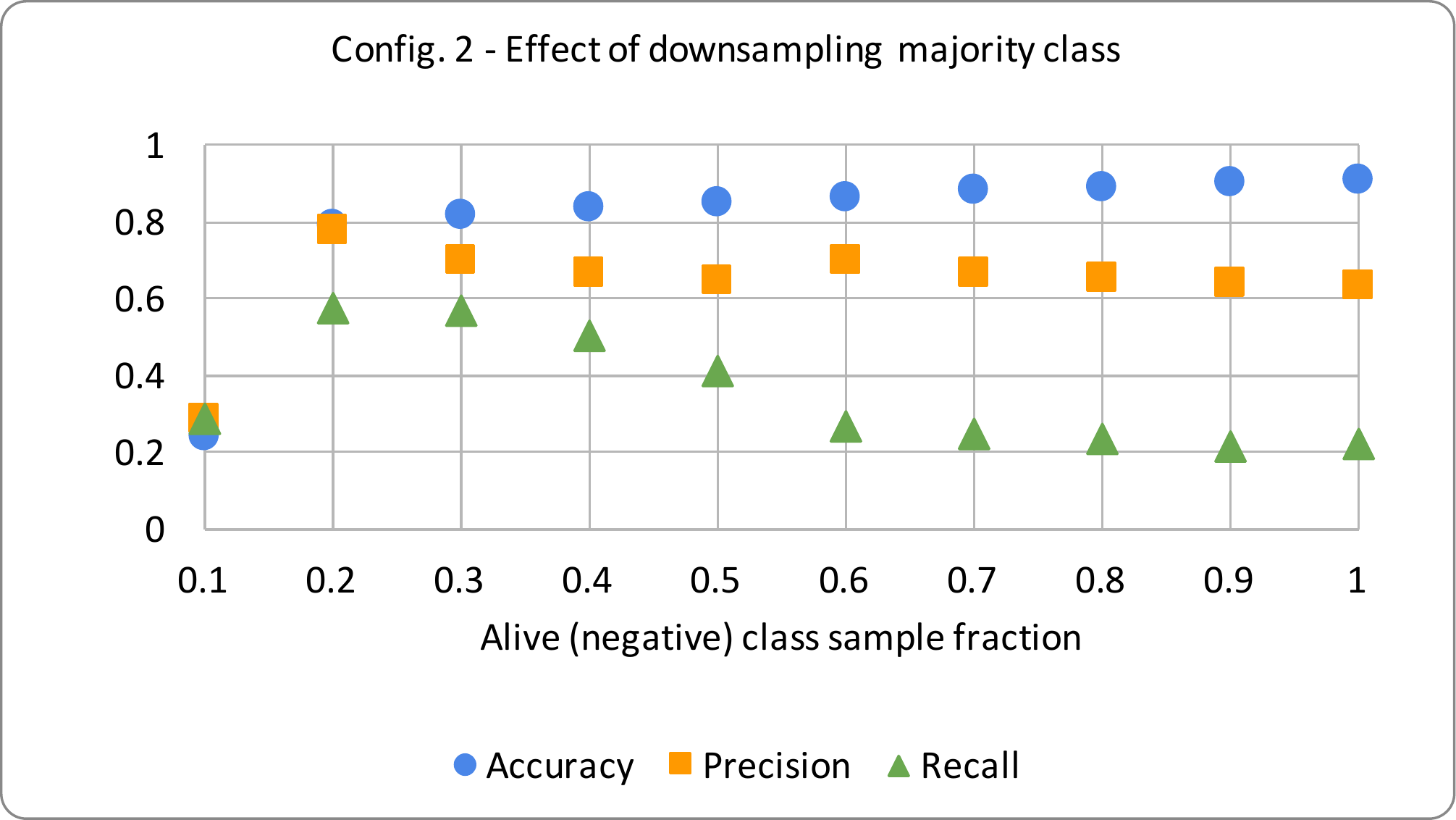}
    \caption{Config. 2: Effect of downsampling majority class in improving
    recall}
    \label{figure:config2_downsampling}
\end{figure}

Figures \ref{figure:config1_downsampling} and
\ref{figure:config2_downsampling} show the outcomes of these experiments.
Figure \ref{figure:config1_downsampling} outlines trends seen for accuracy,
precision, and recall measures as the sampling rate for `Alive' class is
varied from $0.1$ to $1$.
When the alive class sample fraction is $0.1$, the Random Forest model is
being trained with records of all dead patients which is $10\%$ of the entire
candidate dataset, and $0.1$ fraction of the alive patients' records which
amounts to $9\%$ ($0.1$ fraction of $90\%$ of records) of the entire candidate
dataset.
When the Alive class sample rate is $0.2$, the model is being trained with
$10\%$ of the positive samples (patients who are dead), and $18\%$ of the
negative samples (patients who are alive).
The highest recall of $0.58$ is achieved when the majority class sample
fraction is $0.2$.
When the majority sample fraction increases further, recall rate comes down
while accuracy improves.
Accuracy when the `Alive' sample fraction is $0.2$ is $0.79$, and it reaches
a high of $0.91$ when the entire dataset is used for training.
Nearly identical trends are seen in Figure \ref{figure:config2_downsampling}
which shows the effect of undersampling of majority class for Config. 2
parameters.

\paragraph{\textbf{Controlled experiments}}
\begin{figure}[h]
    \centering
    \includegraphics[scale=0.42]{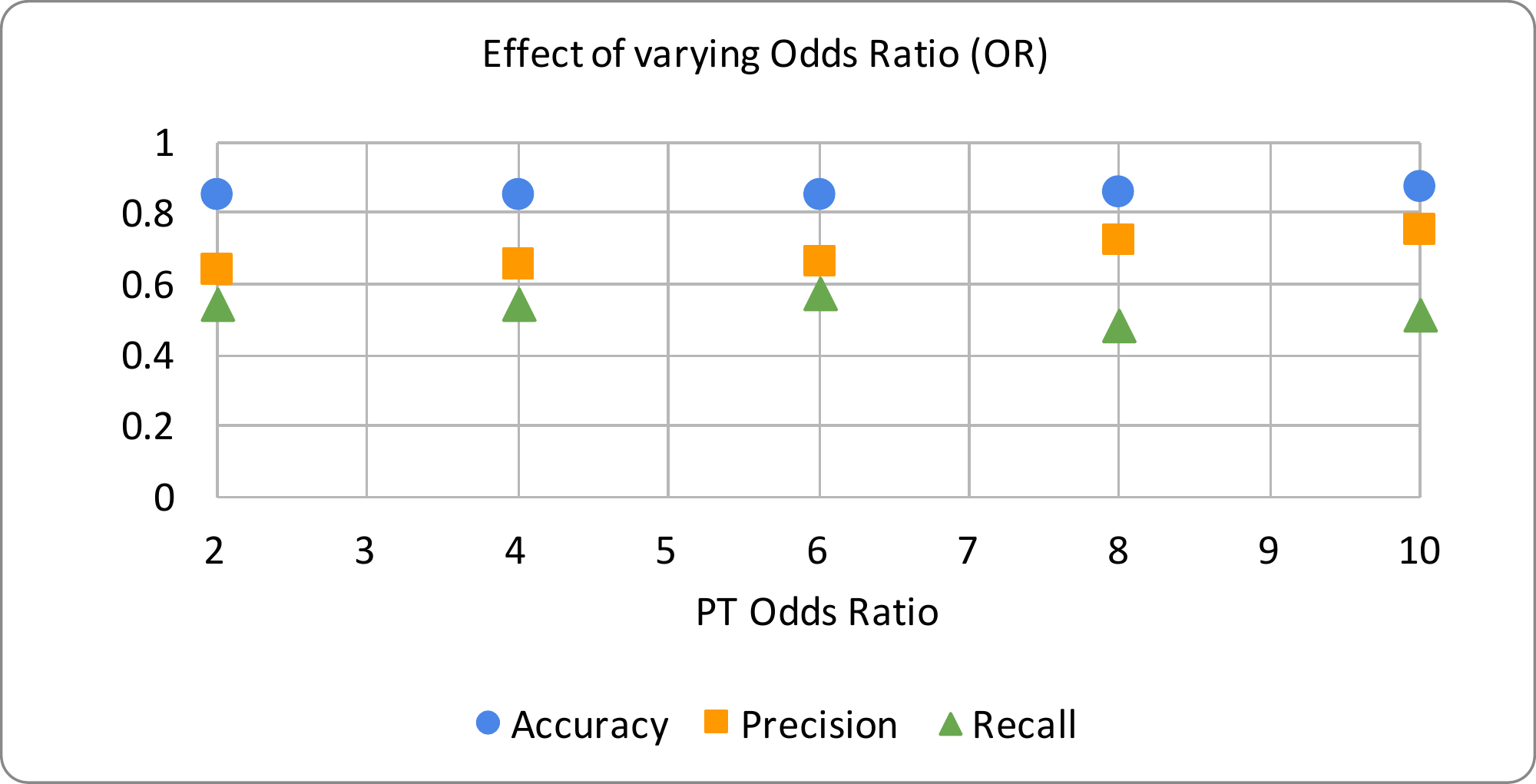}
    \caption{Controlled experiments: effect of varying PT OR alone}
    \label{figure:controlled_OR}
\end{figure}

Figure \ref{figure:config_performance} depicts the performance for the
parameter combinations shown in Table \ref{table:feature_parameters}.
Although we were able to, to a great extent, discern how each of the
parameters affects accuracy, precision, and recall achieved by the Random
Forest models, we perform ``controlled'' experiments to tease apart the
effects of each of the parameters further.
In this set of experiments, we individually vary 1) Odds Ratio 2) DOA ratio, and
3) Occurrence ratio of a predictor variable while keeping the rest of the
parameters
constant.

The predictive performance of PT OR is shown in Figure
\ref{figure:controlled_OR}.
The PT OR is varied from 2 to 10.
We notice that accuracy, and recall are nearly invariant with respect to OR.
The precision measure however sees an uptick as the OR is increased.
This is because, a higher OR indicates a stronger connection between an
abnormal PT value and mortality.
Therefore, the model is able to bring down false negatives thereby increasing
precision.

\begin{figure}[h]
    \centering
    \includegraphics[scale=0.42]{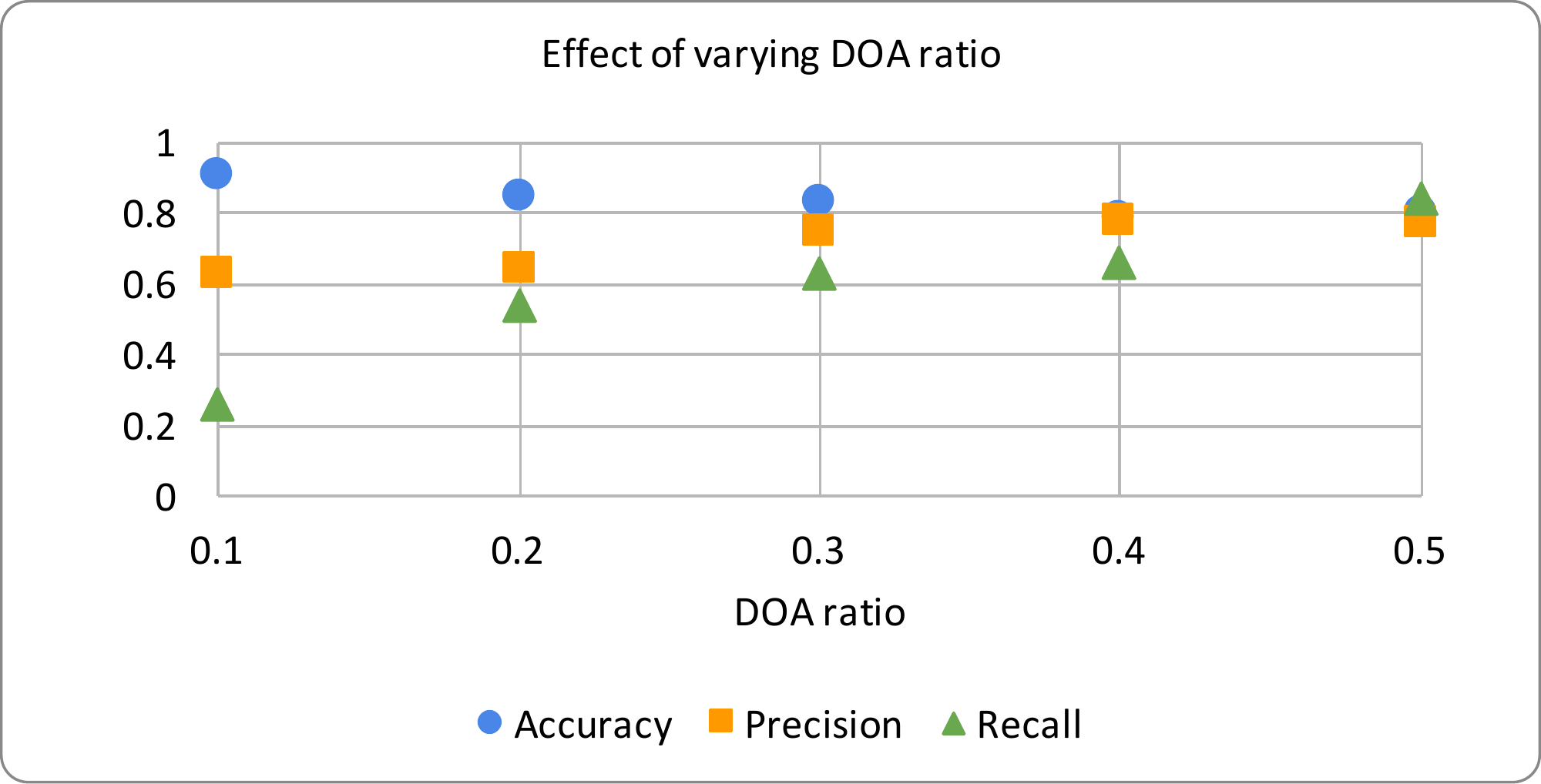}
    \caption{Controlled experiments: effect of varying DOA ratio alone}
    \label{figure:controlled_DOA}
\end{figure}

Figure \ref{figure:controlled_DOA} illustrates the impact DOA ratio has on
accuracy, precision, and recall.
DOA ratio is varied from 0.1 to 0.5 (since this is a binary classification
problem, varying DOA ratio from 0.6 to 0.9 will cause mirror image
performance: performance at DOA ratio $x$ will be the same as when it is $1 - x$).
We notice that as the DOA ratio is increased, recall rate improves because
the dataset becomes more balanced -- the number of records with the positive
class labels will approach the number of records with the negative class labels.
Precision measure also shows an upward trend as the DOA ratio is increased.
Accuracy slightly dips as the DOA ratio becomes larger.
This is explainable from the fact that when the DOA ratio is small, even if
the model predicts everyone as `Alive', the accuracy will be high.
But, as the DOA ratio becomes greater, the model has to learn to be more
discriminating, and consequently accuracy takes a small hit.

\begin{figure}[h]
    \centering
    \includegraphics[scale=0.42]{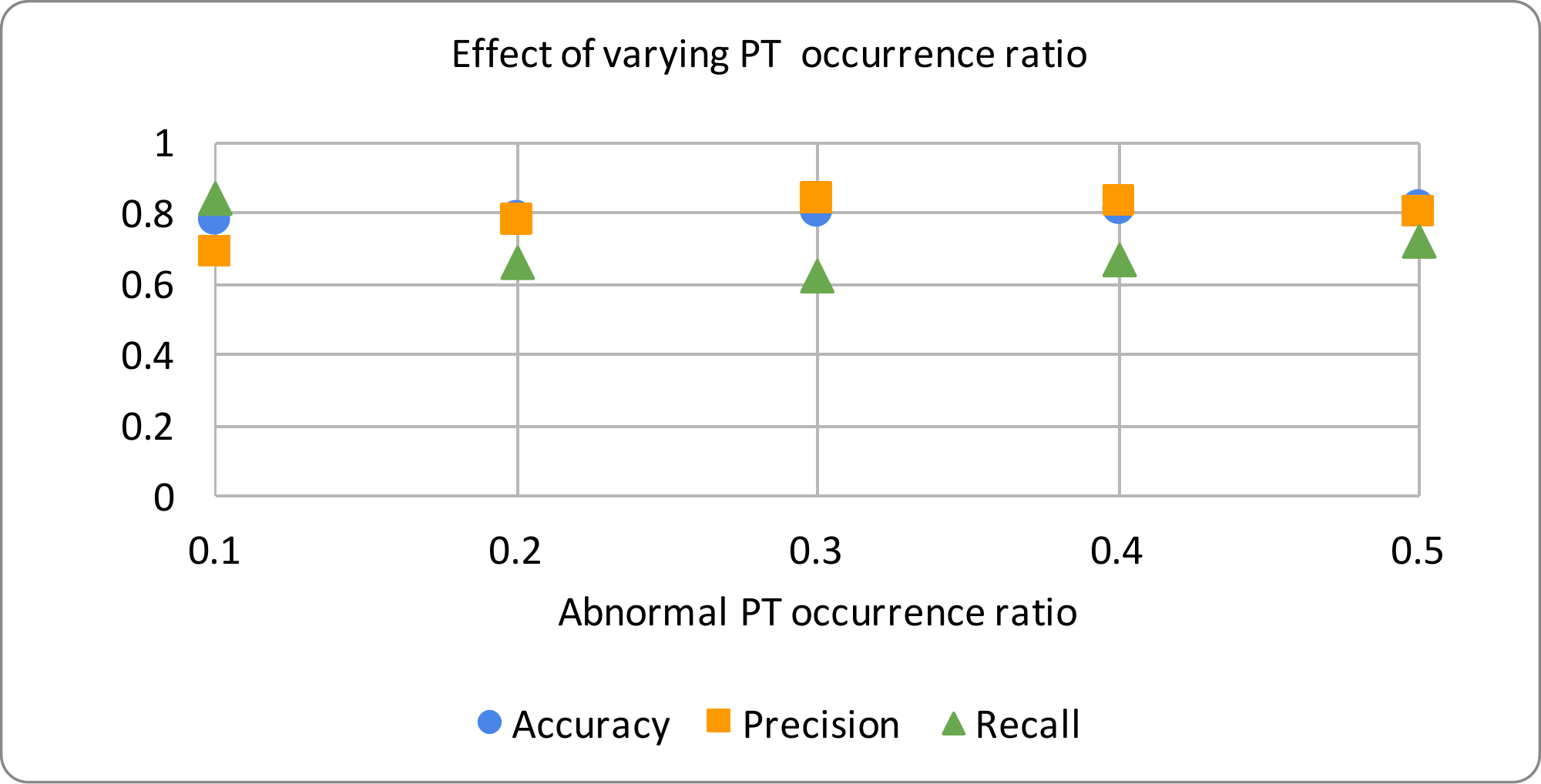}
    \caption{Controlled experiments: effect of varying PT occurrence ratio
    alone}
    \label{figure:controlled_PT_ratio}
\end{figure}

In Figure \ref{figure:controlled_PT_ratio}, we see how increasing the
fraction of patients that have an abnormal PT value affects performance.
The abnormal PT occurrence ratio is varied from 0.1 to 0.5, and accuracy,
recall are little affected by its variation.
The precision on the other hand sees a small gain as the occurrence ratio is
raised.


\section{Related Work}
\label{section:related}

Colbaugh and Glass \cite{colbaugh2017learning} present a method to build machine learning
models to predict individual-level labels using aggregate data.
The methodology consists of three steps: 1) feature extraction, 2)
aggregate-based prediction, 3) prediction refinement.
This approach is similar to hierarchical classification/regression: first the
coarse label is predicted, and then a fine-grained label is assigned.
Their method assumes that individual-level labels are present to train the
model in the refinement step.
Though the high-level goals of their work and our work are similar, the
problem we
address is distinct in that our
solution approach is applicable even when labels for individual-level data are
not available.
Additionally, we have presented an algorithm that reconstructs individual-level
data from
various aggregate statistics including odds ratios.

Gary King's book \cite{king2013solution} illustrates various known techniques
for reconstructing individual behavior from aggregate data, also termed
ecological inference.
The main approach advocated is to utilize a method of bounds where
in the domain specific knowledge is used to place bounds on variables.
The second principal technique in the context of voting in elections is to
perform ecological inference for
smaller precincts and then combine the results for a larger context, say a
state.
This is best illustrated with regards to voting data.
For example, Imai and King \cite{imai2004did} apply ecological inference for
the 2000 U.S. presidential elections to answer the question: ``Did Illegal
Overseas Absentee Ballots Decide'' the election?
Here, the inference problem is figuring out how many valid, and invalid
absentee ballots were
cast for each of the two presidential candidates given that there were a
total of $680$ invalid ballots, $1,810$ valid ones, and the two candidates
received $836$, and $1,575$ votes overall.

Musicant and others \cite{musicant2007supervised} address an
interesting problem of performing supervised learning
by training on aggregate outputs.
In their framework, the training set contains observations for which all attribute
values are known, but the output variable's value is known only in the aggregate.
The need for such an analysis arose when studying mass spectrometry data.
They examine how k-nearest neighbor, neural networks, and support vector
machines can be adapted for this problem.

MacLeod et al. \cite{macleod2003early} analyze data collected in a
prospective study on patients admitted to a Level I trauma center.
They apply logistic regression using the various known acute traumatic
coagulopathy predictors, and determine that coagulopathy is strongly linked
to mortality in trauma patients.

\section{Conclusion}
\label{section:conclusion}

In this paper, we described the architecture of a system for performing
analysis of
aggregate statistics.
We developed a novel algorithm to reconstruct individual-level data from
aggregate data.
To increase the confidence in the data analysis performed, we set up the data
reconstruction algorithm to compute a parametric number of individual-level
datasets.
Furthermore, this step is completely parallel and makes the generation of
datasets efficient.
The datasets will be used to train several machine learning models
concurrently to discover knowledge.
We performed extensive experiments to evaluate the predictive performance of
the system in the context of a medical condition called Acute Traumatic
Coagulopathy.
The experimental results indicate that the system developed achieves good
performance thereby showing that the end-to-end aggregate data analytics
system developed can be reliably used to extract knowledge from aggregate data.

\balance
\bibliographystyle{IEEEtran}
{\footnotesize
\bibliography{bibliography}
}

\end{document}